\newtheorem{theorem}{Theorem}[section]
\title{\LARGE \bf
Cooperative Transportation using Multiple Single-Rotor Robots and Decentralized Control for Unknown Payloads
}
\author{Koshi Oishi$^{1}$, Yasushi Amano$^{1}$, and Tomohiko Jimbo$^{2}$
\thanks{$^{1}$The authors are with the Cloud Informatics Research-Domain,
        Toyota Central R\&D Labs., Inc., 41-1, Yokomichi, Nagakute, Aichi, Japan
        {\tt\small e1616@mosk.tytlabs.co.jp}}%
\thanks{$^{2}$The author R-Frontier Division, Frontier Research Center. Toyota Motor Corporation,
        1, Toyota-cho, Toyota, Japan}%
}
\begin{document}

\maketitle
\thispagestyle{empty}
\pagestyle{empty}

\begingroup
  \renewcommand\thefootnote{}            
  \footnotetext{\footnotesize
  © 2025 IEEE. Personal use of this material is permitted. 
  Permission from IEEE must be obtained for all other uses, in any current or future media, 
  including reprinting/republishing this material for advertising or promotional purposes, 
  creating new collective works, for resale or redistribution to servers or lists, 
  or reuse of any copyrighted component of this work in other works.}
  \addtocounter{footnote}{-1}            
\endgroup

\begin{abstract}
Cooperative transportation via multiple aerial robots has the potential to support various payloads and reduce the chances of them being dropped.
Furthermore, autonomously controlled robots render the system scalable with respect to the payload.
In this study, 
a cooperative transportation system was developed using rigidly attached single-rotor robots,
and a decentralized controller was proposed to guarantee asymptotic stability of the error dynamics for unknown strictly positive real systems.
A feedback controller was used to transform unstable systems into strictly positive real ones considering the shared attachment positions.
First, the cooperative transportation of unknown payloads with different shapes larger than the carrier robots was investigated via numerical simulations. 
Second, cooperative transportation of an unknown payload (with a weight of approximately $2.7$ kg and maximum length of $1.6$ m) was demonstrated using eight robots, even under robot failure. 
Finally, the proposed system was shown to be capable of carrying an unknown payload,
even if the attachment positions were not shared, that is, even if asymptotic stability was not strictly guaranteed.
\end{abstract}

\section{INTRODUCTION}
Recently, aerial robots have been used in various fields such as photography, agriculture, and transportation\cite{c1,c2,c3}.
These robots can move in three dimensions;
thus, their dependence on the construction of ground infrastructure is eliminated,
and transportation via aerial robots can be used to provide a wide range of services \cite{c4}.
However,
the payload that can be transported by a single aerial robot is a major limiting factor,
and control is lost if an aerial robot fails.
This limitation can be overcome by implementing a cooperative transportation system using multiple aerial robots.
Such a system is expected to relax payload limitations, including shape and mass, and reduce the chance of a load being dropped.
Furthermore, the autonomous operation of each robot can facilitate the plugin/out of robots, which improves the expandability of the system.

Cooperative transportation employing multiple aerial robots has been widely reported\cite{c5,c6}. System configurations based on cooperative transportation may be divided into two main categories:
cable-suspended \cite{c7,c8,c9,c10,c11,c13} and rigidly connected\cite{c13_5,c14,c15,c16}.
Wehbeh et al. \cite{c13} also proposed a system using rigid rods and ball joints, 
which is similar to cable-suspended configurations.
Studies on cable-suspended configurations have primarily focused on robot formations \cite{c9} and leader tracking \cite{c10,c11}.
Gassner et al. \cite{c10} proposed a decentralized control system where one follower detected the leader using a camera, 
and this method was verified using two aerial robots.

In contrast, rigidly connected configurations are simpler than cable-suspended configurations,
and the probability of aerial robot collisions is reduced.
However, because the robots hold a payload directly,
coordination in rigidly connected configurations is more difficult than in cable-suspended ones \cite{c13_5}.
Mellinger et al. \cite{c13_5} proposed a decentralized controller 
and demonstrated it using aerial robots and a lightweight payload. 
In addition, 
Wang et al. \cite{c16} proposed a decentralized controller limited to centrosymmetric formations;
however, the effectiveness of this approach was confirmed via simulations 
and not using actual robots.
Owing to the constraints of rigidly connected configurations,
only a few experiments have been conducted with actual robots.

\begin{figure}[t]
        \begin{minipage}[b]{1\linewidth}
        \centering
        \includegraphics[keepaspectratio, scale=0.48]{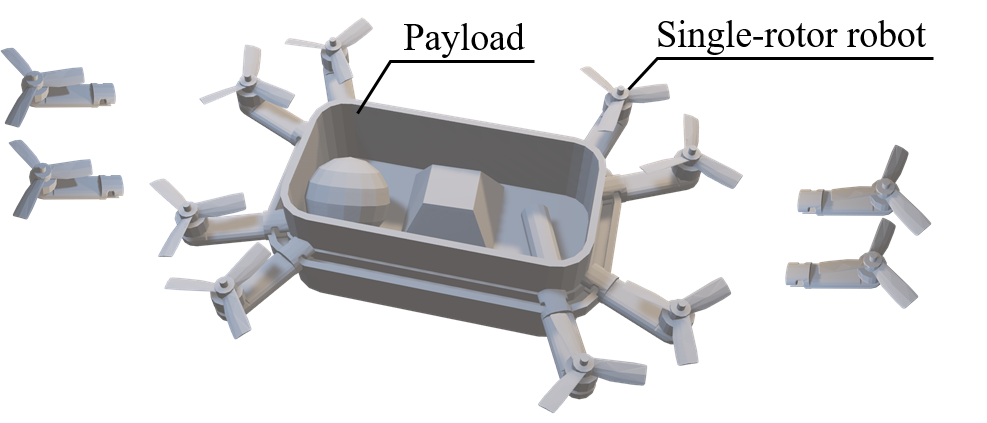}
        \subcaption{Rigidly connected robots to a payload}\label{fig:1a}
        \end{minipage}\\
        \begin{minipage}[b]{1\linewidth}
        \vspace{3mm}
        \centering
        \includegraphics[keepaspectratio, scale=0.36]{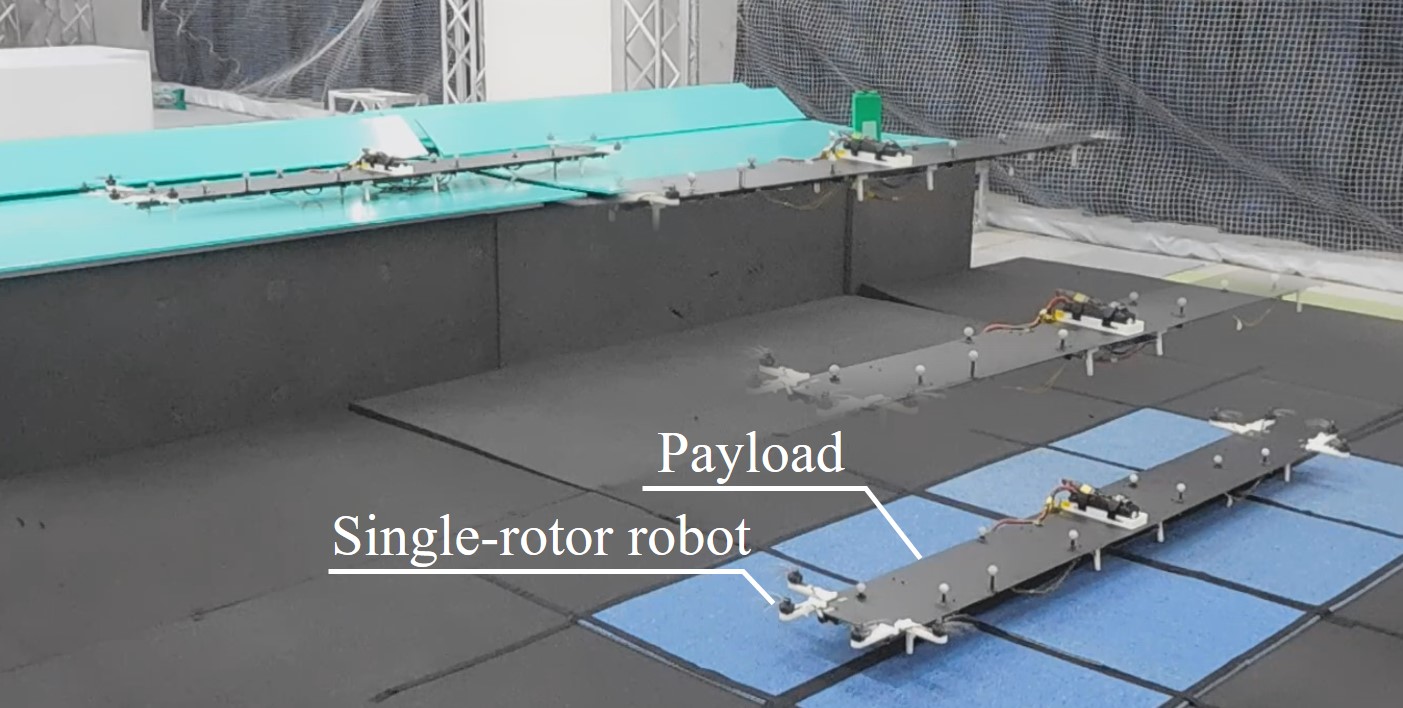}
        \subcaption{Decentralized controlled flight}\label{fig:1b}
        \end{minipage}
        \caption{Cooperative transportation system with autonomously controlled robots}\label{fig:1}
        \vspace{-4mm}
\end{figure}

In our previous study \cite{c16_5},
we proposed a cooperative transportation system with multiple single-rotor robots, which did not possess the ability to fly by themselves. 
The single-rotor robots communicated with each other to estimate the mass and the position of the center of mass (COM) of the payload to be transported. 
Consequently, the mixing matrix was derived and shared to design a centralized flight controller for stabilization of the transportation system.
In contrast, this study investigates decentralized control for a cooperative transportation system with rigid connections, as shown in Fig. \ref{fig:1a}.
The proposed method eliminates the requirements of a mixed matrix or communication to estimate the COM.
Instead, 
by sharing the attachment point, 
each robot derives a controller that is robust to fluctuations in the payload (mass and COM) and robot failures to achieve stabilization.
Furthermore, from a practical perspective, it is desirable to prove the asymptotic stability of the controller relative to a reference.
Therefore, we extend the autonomous smooth switching controller (ASSC) \cite{c17},
whose asymptotic stability has been proven for single-output systems, to multiple-output systems.
However, aerial transportation systems are unstable and the ASSC is only applicable to strictly positive real (SPR) systems.
Therefore, we introduce a feedback controller that transforms the system into an SPR system by sharing the attachment positions among the robots.
The contributions of this study are as follows:

\begin{itemize}
        \item A novel decentralized controller-based ASSC is proposed for unstable transportation systems. 
        This proposed controller is robust against expected fluctuations (mass, COM, and failures), and the asymptotic stability of multiple outputs to the references is proven.
        \item The effectiveness is confirmed via simulations considering rectangular and L-shaped payloads.
        \item Cooperative transportation is demonstrated using a prototype with an unknown payload larger than the robots, even under robot failure, as shown in Fig. \ref{fig:1b}.
\end{itemize}

The remainder of this paper is organized as follows.
Section II describes the dynamics of the target system.
Section III describes the proposed controller based on ASSC.
Section IV describes the numerical experiments for two types of payloads,
and Section V describes the results of prototype experiments using an elongated payload. Finally, the conclusions are presented in Section VI.

\section{DYNAMICS}
The dynamics of a payload transported by multiple single-rotor robots is modeled considering robot failures. 
Specifically, as shown in Fig. \ref{fig:target}, 
consider a configuration in which eight robots having a single rotor each are attached to a rectangular carrier.
The dynamics in the body frame can be approximated around a hovering condition \cite{c13_5} as

\begin{eqnarray}
        \begin{split}
        m\left[
                \begin{array}{c}
                        \ddot x \\
                        \ddot y \\
                        \ddot z \\                        
                \end{array}
                \right ] =
                &\left [
                \begin{array}{ccc}
                        mg & 0 & 0 \\
                        0 & -mg & 0 \\
                        0 & 0 & 0 \\
                \end{array}
                \right ]
                \left[
                \begin{array}{c}
                        \theta \\
                        \phi \\
                        \psi \\                       
                \end{array}
                \right ] \\
                & + \left[
                \begin{array}{c}
                        0 \\
                        0 \\
                        \sum_{i=1}^{8} \sigma_i u_i\\                       
                \end{array}
                \right ] +
                \left[
                        \begin{array}{c}
                                0 \\
                                0 \\
                                -mg\\                       
                        \end{array}
                \right ] \\
        \bm{J}\left[
                \begin{array}{c}
                        \ddot \phi \\
                        \ddot \theta \\
                        \ddot \psi \\                        
                \end{array}
                \right ] =
                &\left [
                \begin{array}{ccc}
                        \sigma_1 r^{1}_2 & \dots & \sigma_8 r^{8}_2 \\
                        \sigma_1 r^{1}_1 & \dots & \sigma_8 r^{8}_1  \\
                        \sigma_1 d_1 c_q & \dots & \sigma_8 d_8 c_q   \\
                \end{array}
                \right ]
                \left[
                        \begin{array}{c}
                                u_{1} \\
                                \vdots \\
                                u_{8} \\
                        \end{array}
                \right ],
        \end{split}
        \label{eq:dy}
\end{eqnarray}
where $x$, $y$, and $z$ are the three-dimensional positions of the payload on the body frame system; 
$\phi$, $\theta$, and $\psi$ are the roll, pitch, and yaw angles of the payload, respectively;
$u_i (i=1,2,\dots,8)$ is the thrust of robot $i$;
$m$ and $\bm{J}$ are the mass and moment of inertia of a payload, respectively;
$\bm{r^i}( = [r^i_1, r^i_2]^\top)$ $( i=1,2,\dots,8)$ is the vector from the origin of the body frame system, 
COM position of the payload, to the attachment position of robot $i$;
$\sigma_i (\in \{0,1\} (i=1,2,\dots,8))$ is the failure parameter of robot $i$, and $\sigma_i = 0$ when robot $i$ fails;
and $d_i (i=1,2,\dots,8)$ is the rotational direction of robot $i$.
Furthermore, $c_q$ is the thrust torque conversion coefficient and $g$ is the gravitational acceleration.

\begin{figure}[t]
        \vspace{2mm}
        \centering
        \includegraphics[width=80mm]{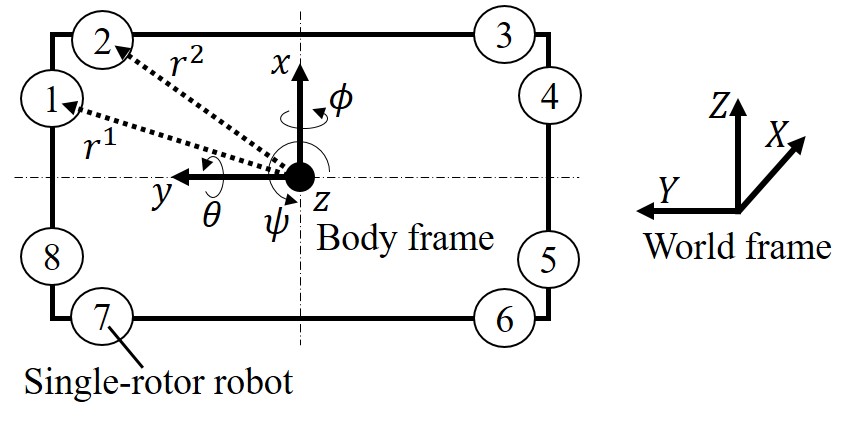}
        \caption{Model of transportation system with multiple rigidly connected robots}
        \label{fig:target}
        \vspace{-4mm}
\end{figure}
We consider the error dynamics for four references, $x_r$, $y_r$, $z_r$, and $\psi_r$ in the body frame.
For the robust feedback controller proposed in the next section, the average thrust of the robots in each quadrant is introduced as
$u_{12} = (u_1 + u_2)/2$, $u_{34}= (u_3 + u_4)/2$, $u_{56}= (u_5 + u_6)/2$, $u_{78}= (u_7 + u_8)/2$.
In this study, the following is assumed:

\begin{itemize}
        \item The robots in each quadrant occupy approximately the same position; in other words, 
        $r^{i_1}=r^{i_2}=r^{i_1 i_2} (i_1 \in \{1,3,5,7\}, i_2=i_1+1)$ for the system with eight robots.
        \item The rotational directions of the robots in each quadrant are the same.
\end{itemize}
Consequently, the error dynamics is obtained from eq. (\ref{eq:dy}) as

\begin{equation}    
        \dot{\bm{\xi}} = \bm{A} \bm{\xi} + \bm{B_{(m,r,\sigma)}}(\bm{U} - \bm{U_r}),
   \label{eq:st}
\end{equation}
where $ \bm{\xi} =
[x_e, \dot x_e, \theta_e, \dot\theta_e, y_e, \dot y_e, \phi_e, \dot \phi_e, z_e, \dot z_e, \psi_e, \dot \psi_e]^\top$,
$x_e = x - x_r$, $y_e = y - y_r$, $z_e = z - z_r$, $\phi_e = \phi - \phi_r$, $\theta_e = \theta - \theta_r$, $\psi_e = \psi - \psi_r$.
$\phi_r$ and $\theta_r$ are the references corresponding to $y_r$ and $x_r$, respectively. 
$\bm{U} (= [u_{12}, u_{34}, u_{56}, u_{78}]^\top)$ is the input vector, and 
$\bm{U_r} (= [u_{r12}, u_{r34}, u_{r56}, u_{r78})]^\top)$ is the stationary input vector.  
$\bm{A} \in \mathbb{R}^{12 \times 12}$ is a state matrix, and
$\bm{B_{(m,r, \sigma)}} \in \mathbb{R}^{12 \times 4}$ is the input matrix depending on the mass of the payload,
$m$, the attachment position vector, $\bm{r}(=[r^{12},r^{34},r^{56},r^{78}]^\top)$, 
and the failure vector $\bm{\sigma}(=[\sigma_{12},\sigma_{34},\sigma_{56},\sigma_{78}]^\top)$.
Note that $\sigma_{i_1 i_2} \in \{1, 2\}(i_1 \in \{1,3,5,7\}, i_2=i_1+1)$.

\section{DECENTRALIZED CONTROL}
In this section, we propose a decentralized controller for aerial transportation systems based on the ASSC using only the broadcasted error, which has been proven to be asymptotically stable when the target system is SPR in \cite{c17}.
However, the target system of eq. (\ref{eq:st}) is not SPR. 
Therefore, first, a robust feedback controller is introduced to transform eq. (\ref{eq:st}) into an SPR system without sharing the exact mass and COM position.
Next, we extend the ASSC to multiple-output systems because its asymptotic stability has been proven for single-output systems \cite{c17}.

\subsection{Robust feedback controller}
The target aerial transportation system is not SPR.
Therefore, 
a state feedback control input $\bm{U_f} (=-\bm{F} \bm{\xi}) \in \mathbb{R}^4$ is added to 
the ASSC input $\bm{U_s} \in \mathbb{R}^4$ to transform the system into an SPR one \cite{c19}. 
Here, $\bm{F} \in \mathbb{R}^{4 \times 12}$ is the feedback gain. 
Consequently, eq. (\ref{eq:st}) is described as

\begin{equation}    
        \dot{\bm{\xi}} = \bm{A} \bm{\xi} + \bm{B_{(m,r,\sigma)}}(\bm{U_f} + \bm{U_s} - \bm{U_r}).
   \label{eq:st2}
\end{equation}
Moreover, the weighted prediction error of $\bm{\xi}$, $\bm{\zeta} \in \mathbb{R}^4$ for four references is introduced as follows: 

\begin{equation}    
        \bm{\zeta} = \bm{C_0} \bm{\xi} + \bm{D_0}(\bm{U_f} + \bm{U_s} - \bm{U_r}),
   \label{eq:st3}
\end{equation}
where $\bm{C_0} \in \mathbb{R}^{4 \times 12}$ and $\bm{D_0} \in \mathbb{R}^{4 \times 4}$.
Note that the introduction of $\bm{C_0}$ and $\bm{D_0}$ corresponds to reducing the relative degree of the transportation system to zero. 
To obtain the feedback gain $\bm{F}$ without sharing the exact $m$ and $r$, even if one fails, 
the matrix $\bm{G} \in \mathbb{R}^{4 \times 4}$ is also designed. 
Thus, $\bm{\zeta}$ of eq. (\ref{eq:st3}) is transformed into

\begin{eqnarray}
        \bm{\eta} = \bm{G} \bm{\zeta},
   \label{eq:st4}
\end{eqnarray}
where $\bm{\eta} (=[\eta_{12}, \eta_{34}, \eta_{56}, \eta_{78}]^\top)$ is the output for the ASSC.
In the following, the subscript of $\bm{B}$ has been omitted for readability.

The target system consisting of eqs. (\ref{eq:st2}) and (\ref{eq:st4}) is SPR if there exist $\bm{F}$, $\bm{G}$,
and positive definite symmetric matrices $\bm{P} \in \mathbb{R}^{12 \times 12}$ that satisfy the following conditions:

\begin{eqnarray}
        \begin{split}
        &\bm{P}(\bm{A}  -  \bm{B} \bm{F})  +  (\bm{A}  -  \bm{B} \bm{F})^\top \bm{P}  +  \\
        & \quad (\bm{P} \bm{B} -  (\bm{G} \bm{C_0})^\top)(\bm{G} \bm{D_0}  + (\bm{G} \bm{D_0})^\top)^{-  1} \times \\
        & \quad \quad (\bm{P} \bm{B}  -  (\bm{G} \bm{C_0})^\top)^\top  <  0,
        \end{split}
   \label{eq:lmi1}
\end{eqnarray}
\begin{eqnarray}
        \bm{G} \bm{D_0}+(\bm{G} \bm{D_0})^\top > 0.
   \label{eq:lmi1-1}
\end{eqnarray}
Because eqs. (\ref{eq:lmi1}) and (\ref{eq:lmi1-1}) cannot be directly solved as a linear matrix inequality (LMI) problem, 
eqs. (\ref{eq:lmi1}) and (\ref{eq:lmi1-1}) are converted into 

\begin{eqnarray}
        \left[ \!
                \begin{array}{cc}
                 \bm{A} \bm{Q} \! + \! \bm{Q} \bm{A}^\top \! \! - \! \bm{B} \bm{R} \! -  \! \bm{R}^\top  \bm{B} & \bm{Q} \bm{C_0}^\top \! \! - \! \bm{B} \bm{S}^\top \\
                 \bm{C_0} \bm{Q} \! - \! \bm{S}  \bm{B}^\top   & -\bm{D_0} \bm{S}^\top \! \! - \! \bm{S} \bm{D_0}^\top \\
                \end{array}
           \!\right] < 0,
           \label{eq:lmi2}
\end{eqnarray}
where is $\bm{Q}(= \bm{P}^{-1}) \in \mathbb{R}^{12 \times 12}$, $\bm{R}(=\bm{F} \bm{Q}) \in \mathbb{R}^{4 \times 12}$,
and $\bm{S}(=\bm{G}^{-1}) \in \mathbb{R}^{4 \times 4}$ can be determined \cite{c19}.

Matrix $\bm{B}$ depends on $m$, $\bm{r}$, and $\bm{\sigma}$.
To transform eq. (\ref{eq:lmi2}) into an SPR system without sharing these exact values,
eq. (\ref{eq:lmi2}) is solved so such that it could be satisfied with multiple $\bm{B}$s simultaneously,
which indicate the vertices of the regions of $m$, $\bm{r}$, and $\bm{\sigma}$.

\begin{figure}[b]
        \centering
        \includegraphics[width=37mm]{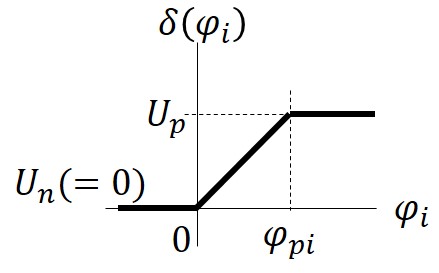}
        \caption{Switching function $\delta (\varphi_i)$}
        \label{fig:us}
\end{figure}

\begin{figure}[t]
        \vspace{2mm}
        \centering
        \includegraphics[width=85mm]{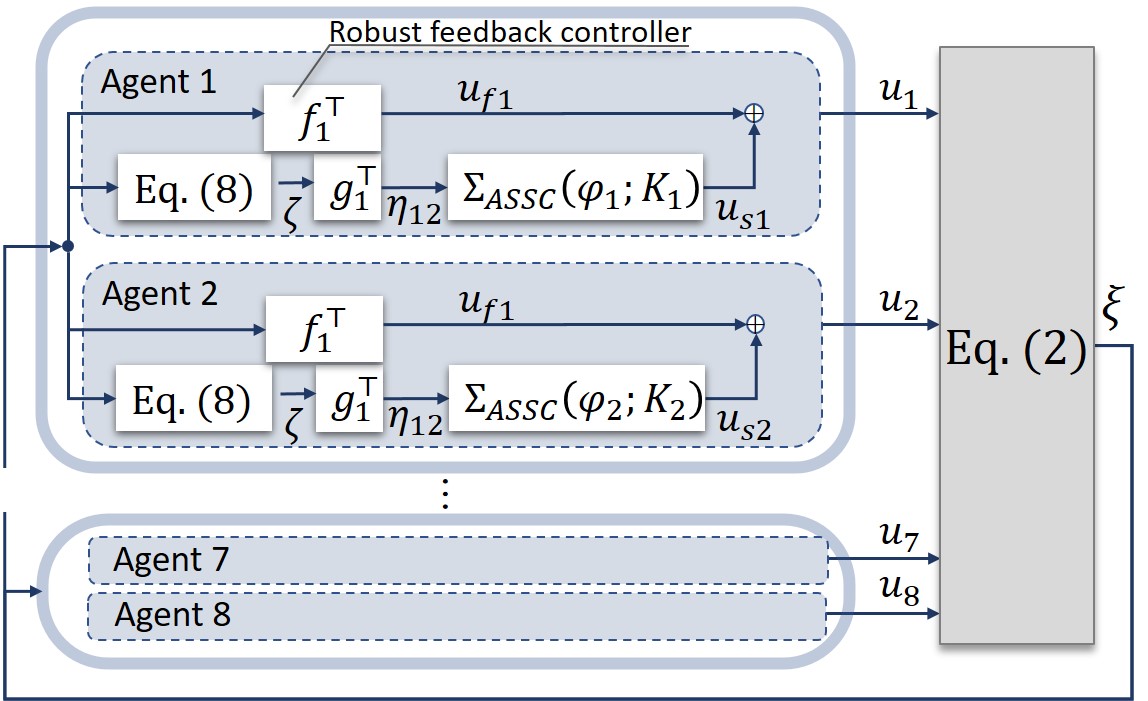}
        \caption{Proposed decentralized control system}
        \label{fig:cont}
        \vspace{-2mm}
\end{figure}

\subsection{ASSC for multiple-output systems}
For the decentralized ASSC, the direct term $\bm{G} \bm{D_0} (\bm{U_f} + \bm{U_s} - \bm{U_r})$ is approximated as

\begin{eqnarray}
        &\bm{\zeta} \simeq \bm{C_0} \bm{\xi} + \hat{\bm{D}}_0 \left[
                \begin{array}{cccc}
                        \ddot{\hat \theta} &\ddot{\hat \phi} & \ddot{\hat z} & \ddot{\hat \psi} \\
                \end{array}
        \right]^\top,
   \label{eq:zeta}
\end{eqnarray}
where $\hat{\bm{D}}_0 \in \mathbb{R}^{4 \times 4}$.
Then, using $\bm{\eta}$ of eq. (\ref{eq:st4}) with eq. (\ref{eq:zeta}), 
the ASSC for multiple outputs is represented as

\begin{eqnarray}
        \begin{split}
        \Sigma_{ASSC} \left\{
                \begin{array}{l}
        \dot\varphi_{i} = -K_{i}\eta_i\\
        u_{si} = \delta(\varphi_{i})=
        \left\{
        \begin{array}{l}
        U_p : \varphi_{i} \ge \varphi_{pi} \\
        \frac{U_p}{\varphi_{pi}} \varphi_i : 0 \le \varphi_{i} < \varphi_{pi} \\
        U_n : \varphi_{i} < 0
        \end{array}
        \right.
        \end{array} \right.
        \end{split}
        \label{eq:cont}
\end{eqnarray}
where $\eta_{j_1} = \eta_{j_2} = \eta_{j_1j_2}$ for $j_1 = \{1,3,5,7\}$ and $j_2 = j_1 + 1$, 
$K_{i} > 0 $ are the gains;
$u_{si}$ is the control input of the ASSC of robot $i$;
$U_p$ and $U_n$ are the upper and lower limits of the thrust applied by the robot, respectively;
and $\delta$ is switching function of $\varphi_i$ as shown in Fig. \ref{fig:us}.
For the rotor without reversal, $U_p$ is the maximum thrust, and $U_n$ is $0$.
$\varphi_{pi}$ is the range of $\varphi_i$ from zero to $U_p$.
Note that the action of robot $i$ can be adjusted via the initial values of $\varphi_{i}$ and $K_{i}$.

For the ASSC in eq. (\ref{eq:cont}) for multiple-output systems, the following theorem holds.

\begin{theorem}
Eq. (\ref{eq:st2}) is assumed to be transformed into a SPR system by $\bm{U_f}.$
Using the ASSC of eq. (\ref{eq:cont}), 
the error $\bm{\eta}$ satisfies $\bm{\eta} \rightarrow \bm{0}$ as $t \rightarrow \infty$.
\end{theorem}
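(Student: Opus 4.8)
The plan is to prove the convergence of the output by a passivity / Kalman--Yakubovich--Popov (KYP) argument, reading the saturating integrator of eq.~(\ref{eq:cont}) as a monotone passive nonlinearity placed in feedback with the SPR plant, and then closing the argument with LaSalle's invariance principle.

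First I would fix coordinates and an equilibrium. Absorbing $\bm{U_f} = -\bm{F}\bm{\xi}$ into the dynamics, I write the loop driven by the ASSC input as $\dot{\bm{\xi}} = \bar{\bm{A}}\bm{\xi} + \bm{B}\bm{v}$, $\bm{\eta} = \bar{\bm{C}}\bm{\xi} + \bar{\bm{D}}\bm{v}$, with $\bar{\bm{A}} = \bm{A} - \bm{B}\bm{F}$ and $\bm{v} = \bm{U_s} - \bm{U_r}$ whose $i$-th component is $v_i = \delta(\varphi_i) - u_{ri}$. For non-reversing rotors ($U_n = 0$) the map $\delta$ is continuous, piecewise linear and monotone non-decreasing, so for each interior steady input $0 < u_{ri} < U_p$ there is a unique $\varphi_i^{\ast}$ with $\delta(\varphi_i^{\ast}) = u_{ri}$; this $\varphi_i^{\ast}$ together with $\bm{\xi} = 0$ is the target equilibrium. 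The SPR hypothesis, which is exactly what (\ref{eq:lmi1})--(\ref{eq:lmi2}) certifies, then supplies through the KYP lemma a symmetric $\bm{P} > 0$ and matrices $\bm{L}, \bm{W}$ obeying $\bar{\bm{A}}^\top \bm{P} + \bm{P}\bar{\bm{A}} = -\bm{L}\bm{L}^\top - \epsilon\bm{P}$, $\bm{P}\bm{B} = \bar{\bm{C}}^\top - \bm{L}\bm{W}$, and $\bar{\bm{D}} + \bar{\bm{D}}^\top = \bm{W}^\top\bm{W}$ for some $\epsilon > 0$; the relative-degree-zero reduction forced by $\bm{D_0}$ guarantees $\bar{\bm{D}} + \bar{\bm{D}}^\top > 0$, so $\bm{W}$ has full column rank.

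Next I would build the Lyapunov function $V = \tfrac12\bm{\xi}^\top\bm{P}\bm{\xi} + \sum_i K_i^{-1}\int_{\varphi_i^{\ast}}^{\varphi_i}(\delta(s) - u_{ri})\,ds$. Monotonicity of $\delta$ makes each integral nonnegative, and saturation of $\delta$ away from $u_{ri}$ on both sides makes it grow without bound as $\varphi_i \to \pm\infty$; with $\bm{P} > 0$ this renders $V$ positive definite and radially unbounded about the equilibrium, so its sublevel sets are compact and trajectories are bounded. Differentiating along the closed loop and using $\dot{\varphi}_i = -K_i\eta_i$, the controller contributes $-\sum_i v_i\eta_i = -\bm{v}^\top\bm{\eta}$, which cancels the cross terms of $\bm{\xi}^\top\bm{P}\bm{B}\bm{v}$ once the three KYP identities are inserted, leaving $\dot{V} = -\tfrac12\lVert \bm{L}^\top\bm{\xi} + \bm{W}\bm{v}\rVert^2 - \tfrac{\epsilon}{2}\bm{\xi}^\top\bm{P}\bm{\xi} \le 0$. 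This is precisely the step that generalizes the scalar result of \cite{c17}: the single storage coefficient becomes the quadratic form $\bm{W}^\top\bm{W}$, and the telescoping of the cross term must be checked in matrix form.

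Finally I would invoke LaSalle's invariance principle, which applies in its classical form because $\delta$ piecewise linear makes the closed-loop field locally Lipschitz. Boundedness confines each trajectory to a compact sublevel set of $V$, and it converges to the largest invariant set inside $\{\dot V = 0\}$. On that set the term $-\tfrac{\epsilon}{2}\bm{\xi}^\top\bm{P}\bm{\xi}$ forces $\bm{\xi} = 0$, whereupon $\bm{L}^\top\bm{\xi} + \bm{W}\bm{v} = 0$ collapses to $\bm{W}\bm{v} = 0$; here the full column rank of $\bm{W}$ is decisive, upgrading this to $\bm{v} = 0$, so the invariant set reduces to $\{\bm{\xi} = 0,\ \bm{v} = 0\}$, where $\bm{\eta} = \bar{\bm{C}}\bm{\xi} + \bar{\bm{D}}\bm{v} = 0$. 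Convergence to this set therefore yields $\bm{\eta} \to 0$. The main obstacle I anticipate is bookkeeping rather than analysis: correctly identifying the effective output pair $(\bar{\bm{C}}, \bar{\bm{D}})$ of the map $\bm{v} \mapsto \bm{\eta}$ so that it matches the $\bm{G}\bm{C_0}$ and $\bm{G}\bm{D_0}$ data in (\ref{eq:lmi1}), and verifying $U_n < u_{ri} < U_p$ so that $\varphi_i^{\ast}$ exists and $V$ is radially unbounded; the full-rank argument for $\bm{W}$ is the genuinely new ingredient beyond the single-output case.
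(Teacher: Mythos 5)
Your proof is correct, and at its core it uses the very same object as the paper: the controller storage function $\sum_i K_i^{-1}\int(\delta(s)-u_{ri})\,ds$, whose derivative along $\dot{\varphi}_i=-K_i\eta_i$ gives exactly $-\bm{v}^\top\bm{\eta}$ (the paper writes the factor as $1/(2K_i)$ only because it sums over all eight robots rather than the four quadrant inputs). The two routes diverge at the closing step. The paper proves only the passivity of the ASSC block and then delegates the conclusion---SPR plant in feedback with a passive block implies asymptotic convergence of the error---to the hyperstability theorem \cite{c31,c32} and to Amano et al. \cite{c17}. You instead carry out that step explicitly: KYP-type matrices $(\bm{P},\bm{L},\bm{W})$ from the SPR hypothesis, the composite Lyapunov function (plant storage plus controller storage), the dissipation identity $\dot V=-\tfrac12\lVert\bm{L}^\top\bm{\xi}+\bm{W}\bm{v}\rVert^2-\tfrac{\epsilon}{2}\bm{\xi}^\top\bm{P}\bm{\xi}$, and LaSalle's invariance principle, with full column rank of $\bm{W}$ forcing $\bm{v}=0$ and hence $\bm{\eta}=0$ on the invariant set. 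What your route buys is a self-contained argument that genuinely exhibits the multiple-output generalization the theorem claims; the matrix cross-term cancellation and the rank argument on $\bm{W}$ are precisely what the paper's citation hides. What it costs is two technical points you must pin down: the classical KYP lemma presupposes a minimal realization, so the cleaner path is to extract $\bm{L},\bm{W}$ by a Schur-complement factorization of the solved LMI in eq. (\ref{eq:lmi2}) rather than invoking the lemma itself; and the effective output pair seen by the ASSC is $\bar{\bm{C}}=\bm{G}(\bm{C_0}-\bm{D_0}\bm{F})$, $\bar{\bm{D}}=\bm{G}\bm{D_0}$, since the feedthrough of $\bm{U_f}$ folds into the state term---the bookkeeping issue you yourself flagged. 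Note also that $\bm{W}^\top\bm{W}=\bm{G}\bm{D_0}+(\bm{G}\bm{D_0})^\top>0$ is enforced by the paper's LMI (its inverse appears in eq. (\ref{eq:lmi1})), so the rank condition you single out as the new ingredient is automatically satisfied. Neither point is a gap; your proof is a valid, more explicit rendering of what the paper outsources to its references.
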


\begin{proof}
For simplicity, consider the case where eq. (\ref{eq:cont}) yields a binary output.
The storage function is defined as

\begin{eqnarray}
        V_{i}^c = \sum_{i=1}^8 \int_0^{\varphi_{i}} \frac{\delta(\varphi_{i}) - u_{ri}}{2K_{i}} d\varphi_{i}
   \label{eq:storage}
\end{eqnarray}
where $K_i$ $(i = 1, \dots, 8)$ are constants, and $u_{ri} \in [U_n,U_p]$.
The derivative of eq. (\ref{eq:storage}) is

\begin{eqnarray*}
        \begin{split}                
        \dot V^c &= \sum_{i=1}^8 \frac{\delta(\varphi_{i}) - u_{ri}}{2K_{i}} \dot{\varphi}_{i} = \sum_{i=1}^8 \frac{u_{ri} - u_{si}}{2} \eta_i\\
                  &= -\bm{\eta}^\top(\bm{U_s} - \bm{U_r})
        \end{split}
   \label{eq:vc}
\end{eqnarray*}
which demonstrates the passivity.
As a result, asymptotic stability of the error is guaranteed for the ASSC. 
Further details regarding the continuous value output in eq. (\ref{eq:cont}) can be found in Amano et al. \cite{c17} using the hyperstability theorem \cite{c31,c32}.
\end{proof}

\subsection{Decentralized control system}
Figure \ref{fig:cont} shows the configuration of the entire control system.
In this study, because the attachment positions were being shared among the robots, each robot could calculate eq. (\ref{eq:lmi2}) by itself.
Here, the expected fluctuations of $m$, $\bm{r}$, and $\bm{\sigma}$ were also shared among robots.
To transform the unstable system into an SPR system, 
the four outputs of the same size as the rank of the input matrix $\bm{B}$ were selected. 
Because our control purpose was to carry a payload to the target position in the target yaw direction, 
$x$, $y$, $z$, and $yaw$ angle were selected. 
As a result of controlling $x$ and $y$, the roll and pitch angles were also controlled. 
Furthermore, $\bm{C_0}$ and $\bm{\hat{D}_0}$ were tuned for each of the four outputs to obtain the prediction error. 

\begin{figure}[t]
        \vspace{-2mm}
        \begin{minipage}[b]{0.5\linewidth}
            \centering
            \includegraphics[keepaspectratio, scale=0.33]{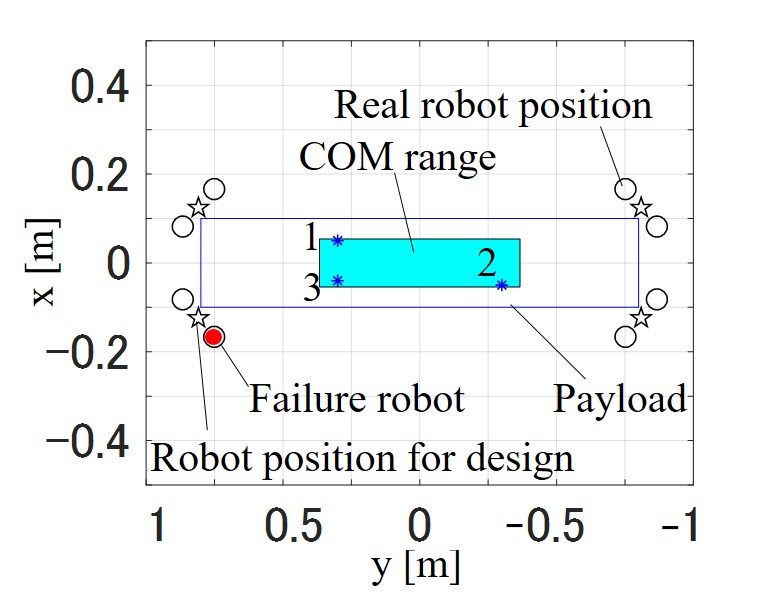}
            \subcaption{Rectangular}\label{fig:ita_con}
        \end{minipage}
        \begin{minipage}[b]{0.49\linewidth}
            \centering
            \includegraphics[keepaspectratio, scale=0.34]{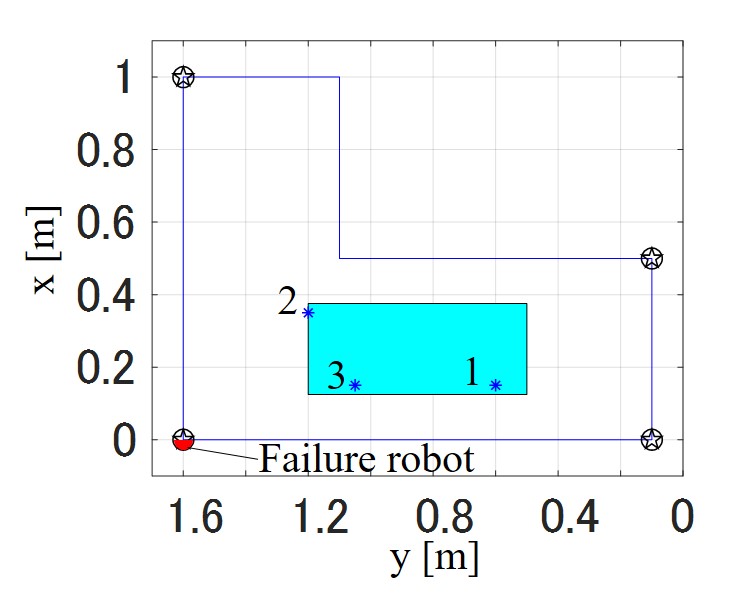}
            \subcaption{L-shaped}\label{fig:L_con}
        \end{minipage}
        \caption{Two payloads with different shapes used for the simulations. 
        The blue line indicates the shape of the payload, 
        the circles indicate the actual robot positions, 
        the stars indicate the robot positions for the control design, 
        the filled area indicates the COM range, 
        the asterisks indicate the simulated COM positions,
        and the red circles indicate the positions of the failed robots.
        }\label{fig:sim_c}
        \vspace{-4mm}
\end{figure}

\begin{figure}[!t]
        \vspace{2mm}
        \begin{minipage}[b]{1\linewidth}
        \centering
        \includegraphics[keepaspectratio, scale=0.373]{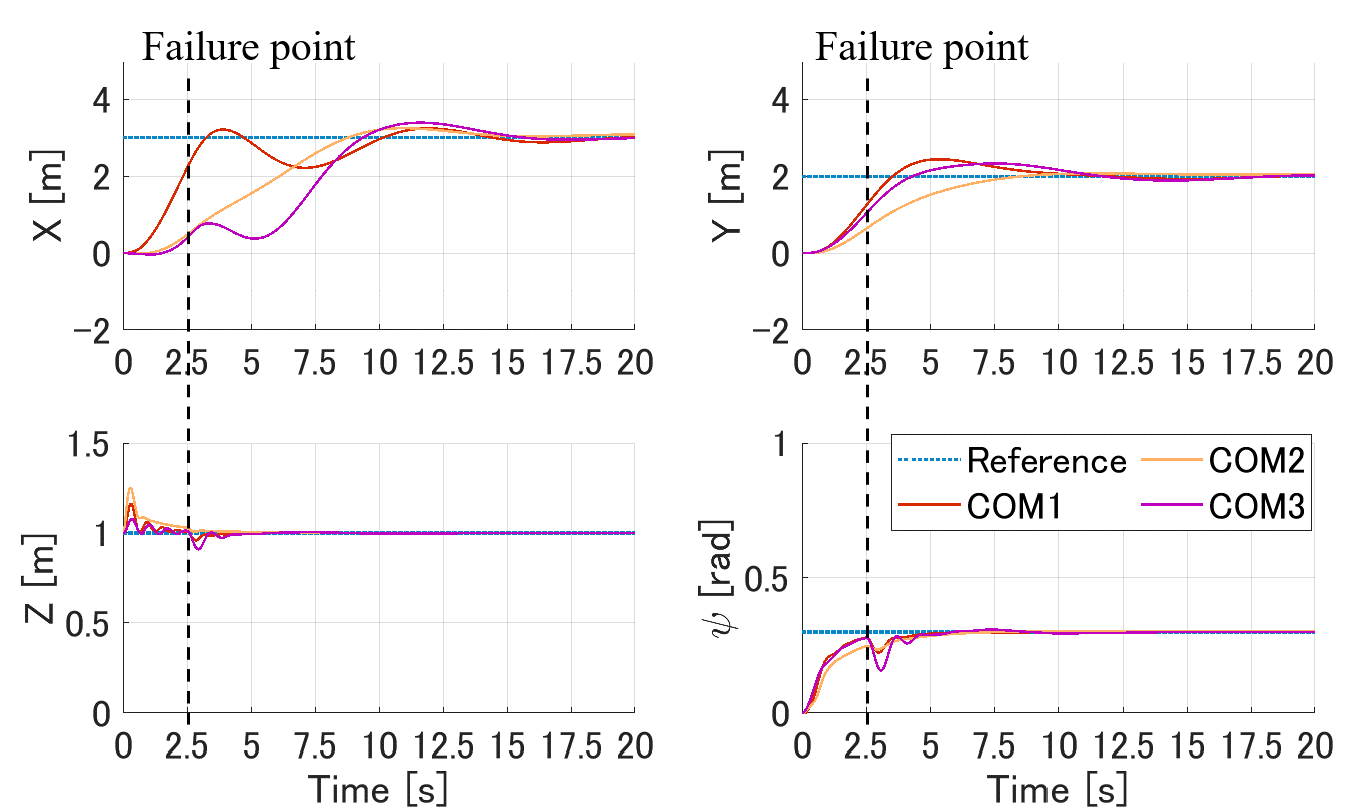}
        \subcaption{Rectangular}\label{fig:sim3}
        \end{minipage} \\
        \begin{minipage}[b]{1\linewidth}
        \centering
        \includegraphics[keepaspectratio, scale=0.373]{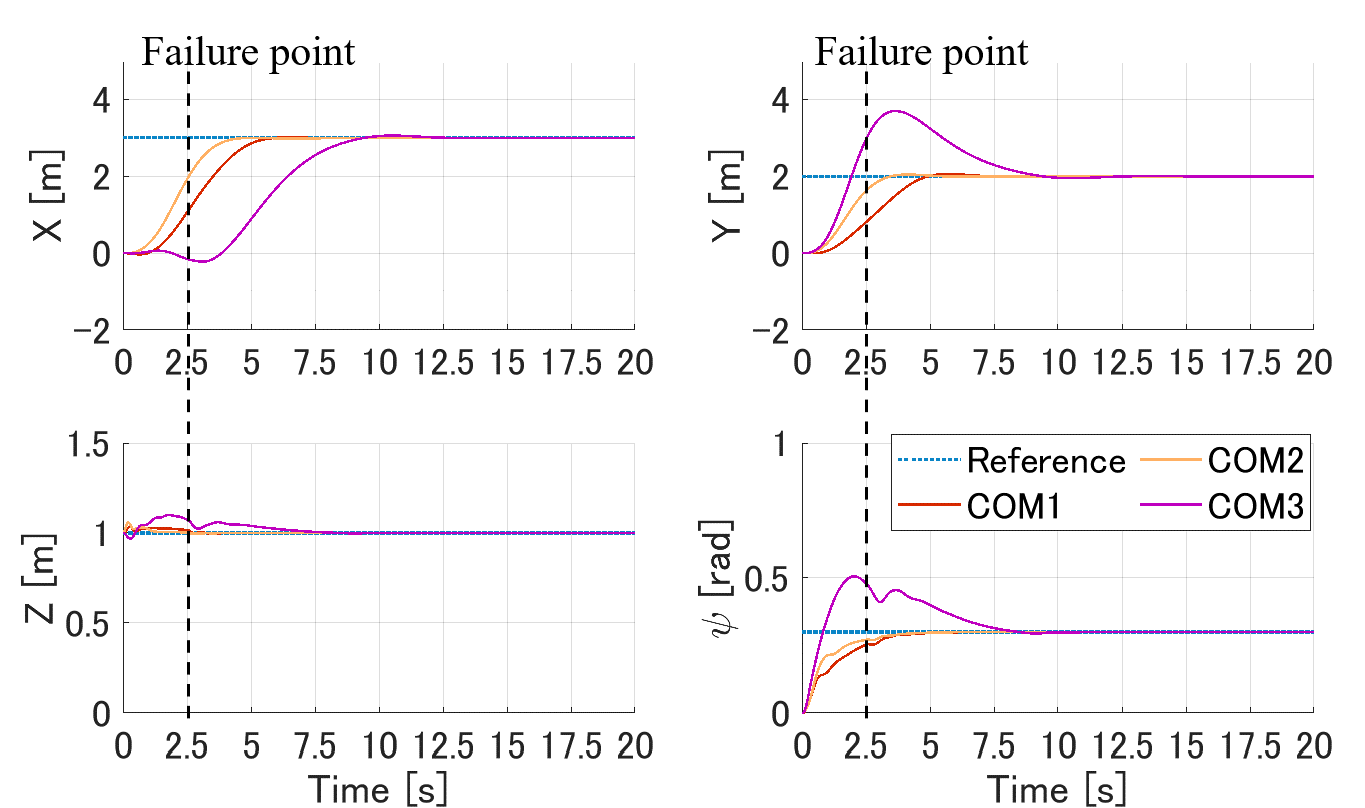}
        \subcaption{L-shaped}\label{fig:sim4}
        \end{minipage}
        \caption{Time series of the position and yaw angle of the payload during three flight simulations with a biased mass and COM, and robot failure.
        The dotted lines represent the reference positions and yaw angle, and the solid lines represent the actual positions and yaw angle.}\label{fig:sim_fail}
        \vspace{-4mm}
\end{figure}

\section{SIMULATION}
Simulations with eight robots were performed using rectangular and L-shaped payloads. 
The controller was designed considering the deviations of the mass and COM position along the $x$ and $y$ axes, as well as robot failures.
For the control design, the equivalent position of the robots in each quadrant was used.
During the simulations, flights with mass and COM bias and robot failures were performed to validate the proposed method.

\subsection{Condition}
Figure \ref{fig:sim_c} shows the two object shapes used for the simulations.
The rectangular-shaped payload is similar to the prototype described in Section V.
In case of the L-shaped payload, the positions of the robots used in the control design and actual design were the same.
The maximum thrust of the single-rotor robot was set to $20$ N.
The expected fluctuations were as follows: $m \in [1 \quad 3]$ kg; the number of failed robots was one; the COM range was as shown in Fig. \ref{fig:sim_c}.
The parameters of the ASSC were as follows: $U_p = 20$ N; $U_n = 0$ N; $K_i = 10 \quad (i=1,\dots, 8)$ for the rectangular payload; $K_i = 20 \quad (i = 1, \dots,8)$ for the L-shaped payload.
In the simulation, 
the flights to the references were performed with three paired masses 
and COM positions $(m, \rm{COM}) \in \{(2,\rm{COM1}), (1, \rm{COM2}), (3,\rm{COM3})\}$, as shown in Fig. \ref{fig:sim_c}.
In addition, one robot was stopped after $2.5$ s to simulate failure.
The references, $X_r$, $Y_r$, and $Z_r$ in the world frame, 
were set to $3.0$, $2.0$, and $1.0$ m, respectively, while $\psi_r$ in the body frame was set to $0.3$ rad.
$X_r$, $Y_r$, and $Z_r$ were transformed into $x_r$, $y_r$, and $z_r$ in the body frame.
The initial position of $z$ was set to $1.0$ m, and the others were set to zero.

\subsection{Results}
Figure \ref{fig:sim_fail} shows the results of the three simulations.
The positions and yaw angle of the payload reached the references,
even if the mass and the COM were biased and robot failure occurred. 
For both shapes, a large fluctuation was observed in the case of COM3. 
This is because the COM position was the closest to the failed robot.
As the L-shaped payload is unbalanced, it tends to fluctuate more than the rectangular one.
The fluctuation in the rectangular payload was found to be larger only in the $X$-direction, 
because there was no margin in the moment arm in the $x$-direction.

\begin{figure}[t]
        \vspace{2mm}
        \begin{minipage}[b]{1\linewidth}
                \centering
                \includegraphics[width=70mm]{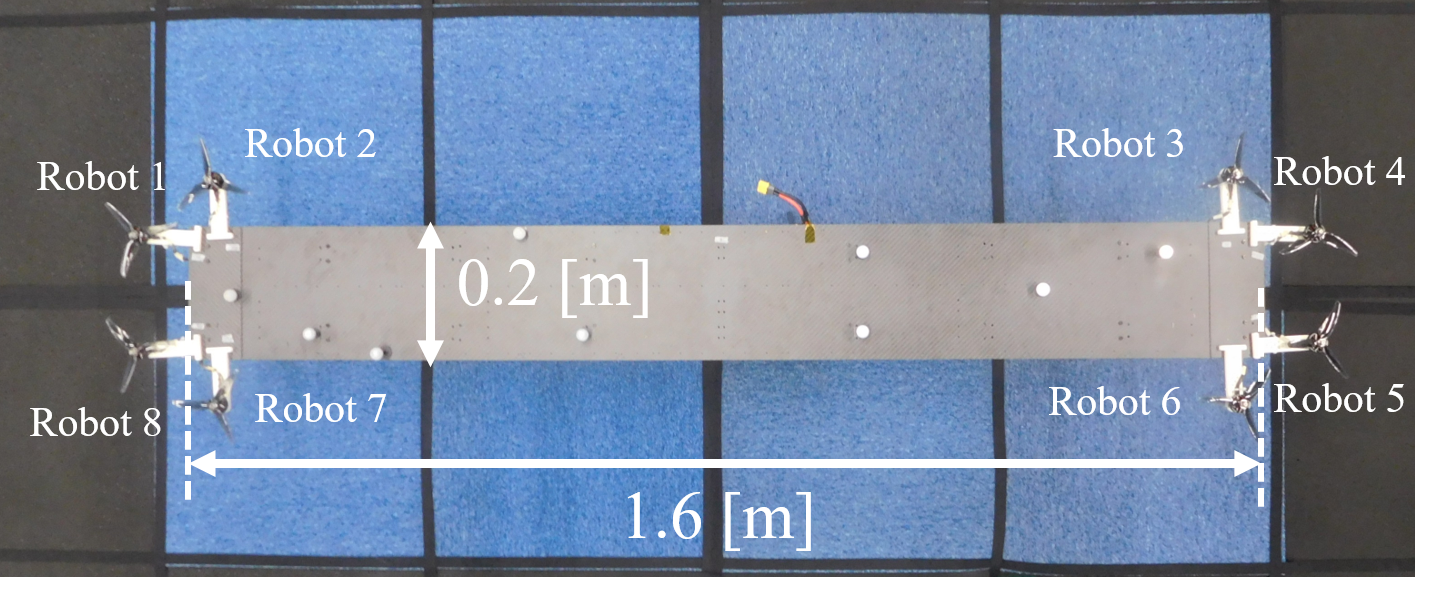}
                \caption{Prototype}
                \label{fig:proto}
        \end{minipage} \\
        \begin{minipage}[b]{1\linewidth}
                \centering
                \includegraphics[width=70mm]{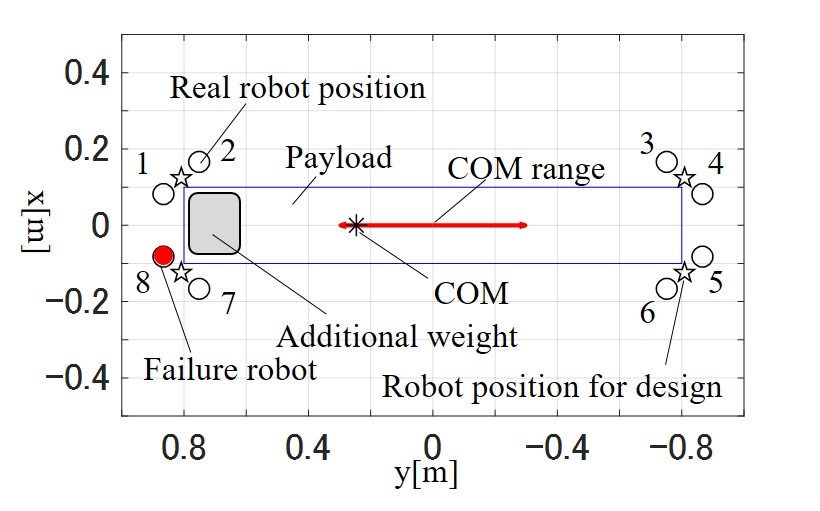}
                \caption{Prototype specifications for the failure experiment. 
                The red arrow denotes the range of the designed COM. 
                The blue line indicates the shape of the payload, 
                the circles indicate the actual robot positions, 
                the stars indicate the robot positions for control design, 
                the filled area indicates the COM range, 
                the asterisks indicate the actual COM, 
                and the red circle indicates the position of the failure robot ($i = 8$).}
                \label{fig:ex_con1}
        \end{minipage}
        \vspace{-4mm}
\end{figure}
\begin{figure*}[b]
        \vspace{-1mm}
        \centering
        \begin{minipage}[b]{0.22\linewidth}
            \centering
            \includegraphics[keepaspectratio, scale=0.33]{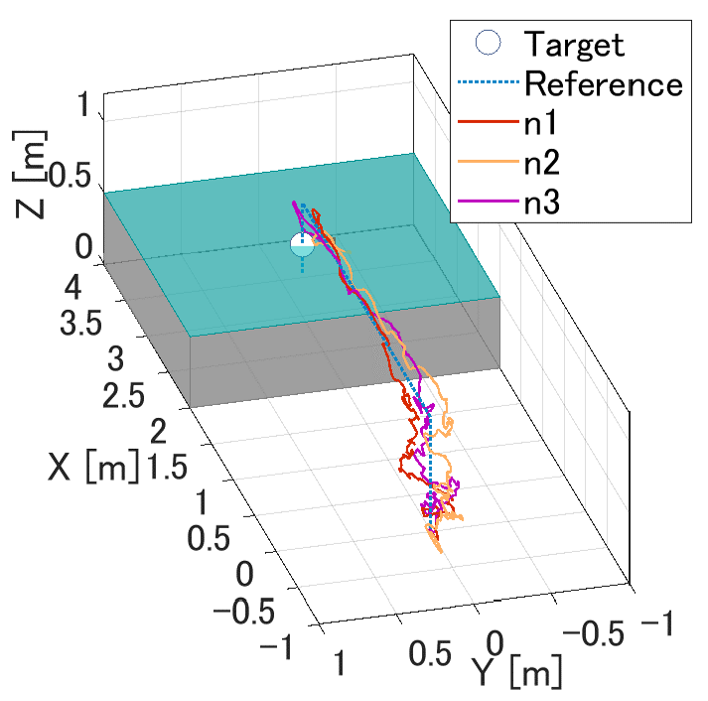}
            \subcaption{Trajectory}\label{fig:ex1l_3D}
        \end{minipage}
        \begin{minipage}[b]{0.38\linewidth}
            \centering
            \includegraphics[keepaspectratio, scale=0.33]{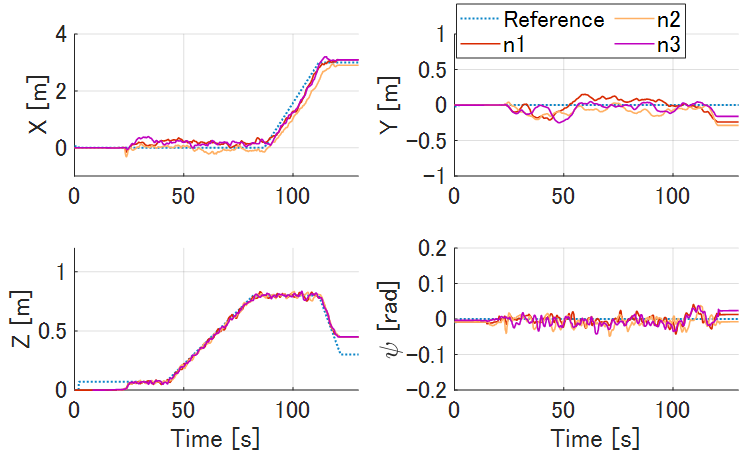}
            \subcaption{Time series of position and yaw angle}\label{fig:ex1l_time}
        \end{minipage} 
        \begin{minipage}[b]{0.38\linewidth}
                \centering
                \includegraphics[keepaspectratio, scale=0.33]{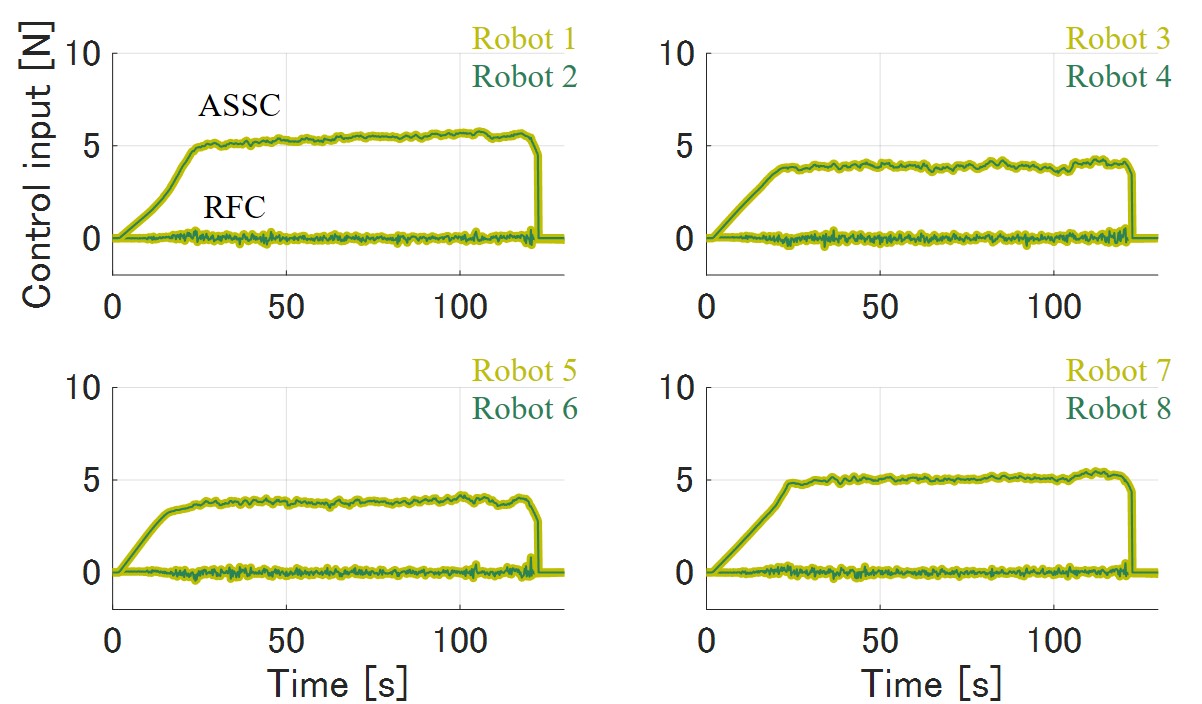}
                \subcaption{Time series of control input}\label{fig:ex1l_U}
            \end{minipage} \\
        \begin{minipage}[b]{0.22\linewidth}
            \centering
            \includegraphics[keepaspectratio, scale=0.33]{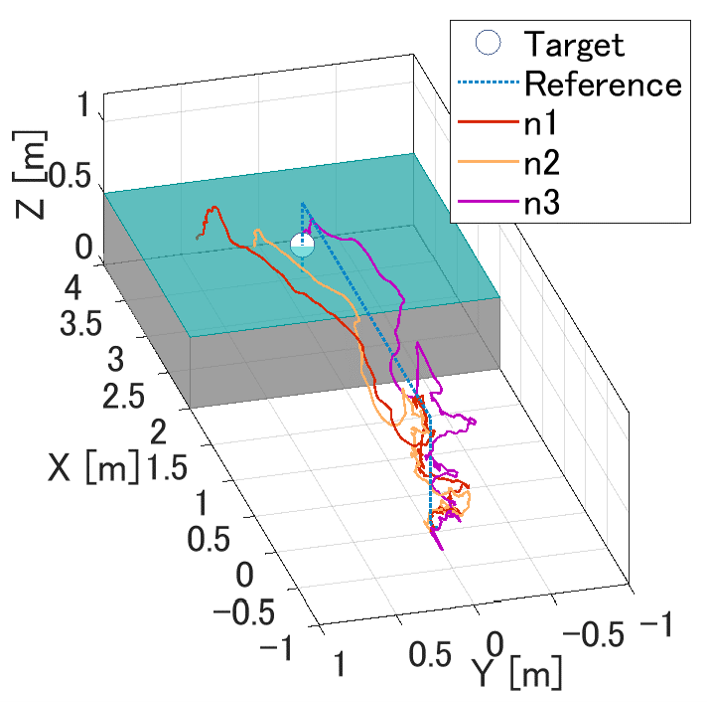}
            \subcaption{Trajectory}\label{fig:ex1r_3D}
        \end{minipage}
        \begin{minipage}[b]{0.38\linewidth}
            \centering
            \includegraphics[keepaspectratio, scale=0.33]{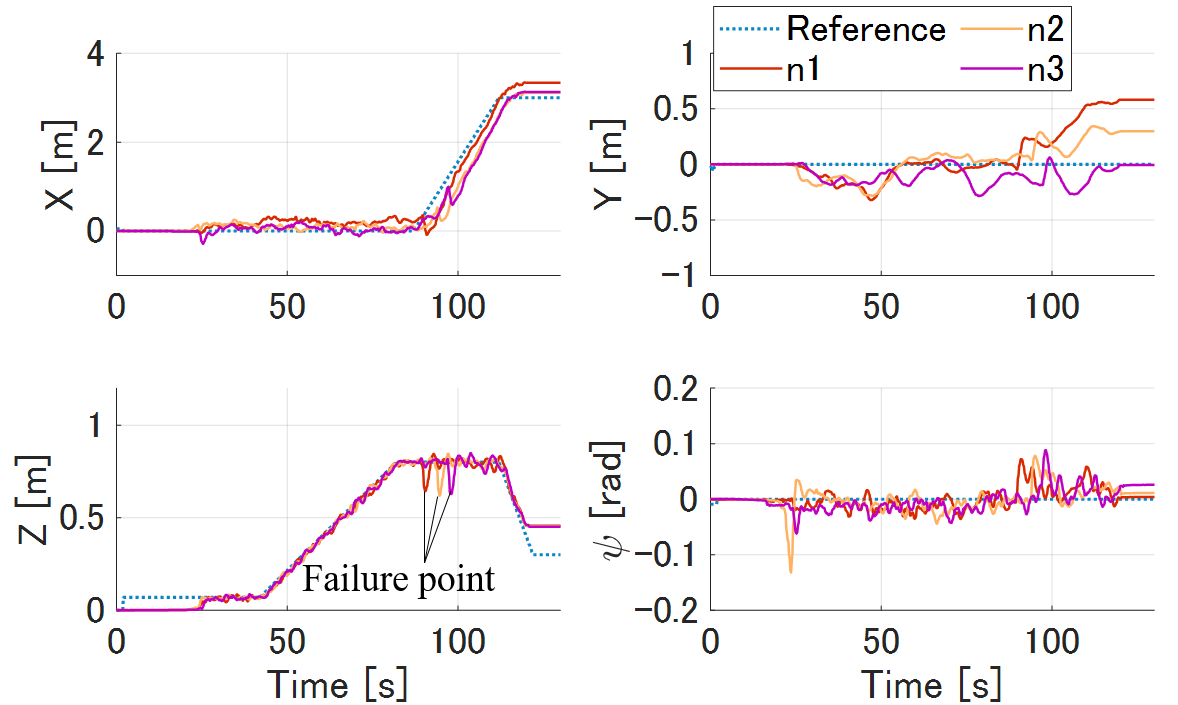}
            \subcaption{Time series of position and yaw angle}\label{fig:ex1r_time}
        \end{minipage}
        \begin{minipage}[b]{0.38\linewidth}
                \centering
                \includegraphics[keepaspectratio, scale=0.33]{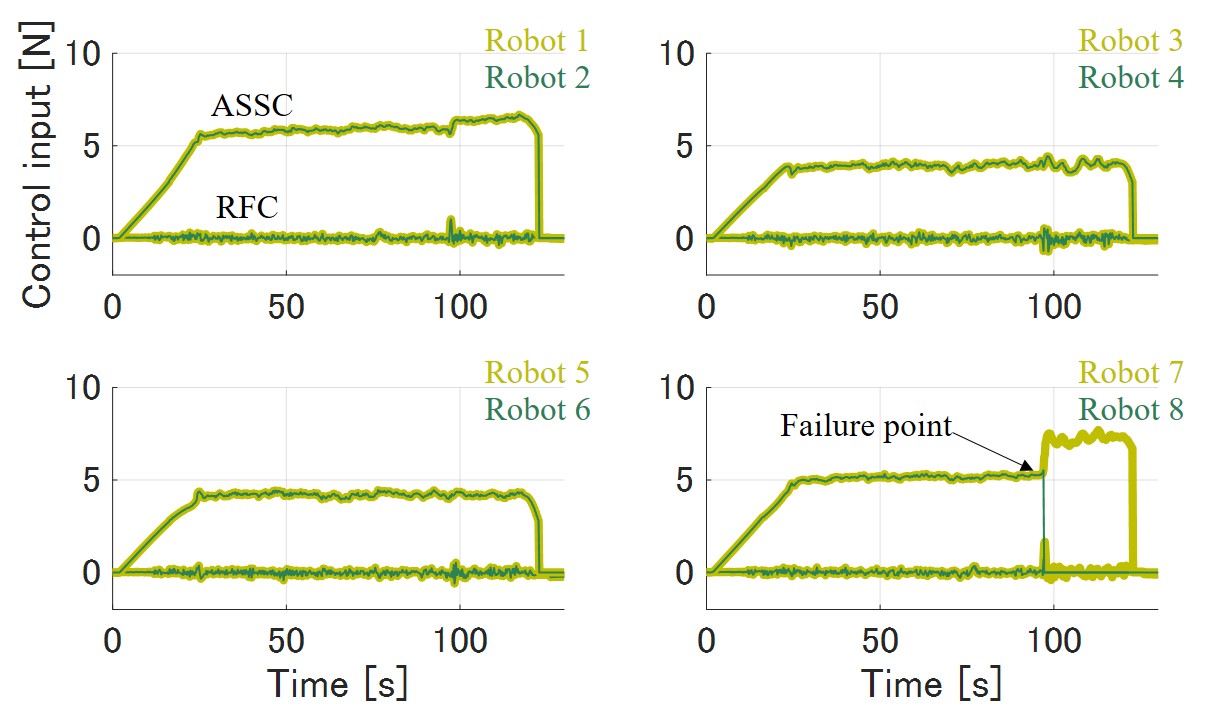}
                \subcaption{Time series of control input}\label{fig:ex2l_U}
            \end{minipage} 
        \caption{Payload positions, yaw angle, and control inputs during three flight experiments with a biased COM. 
        (a), (b), and (c) Biased COM. 
        (d), (e), and (f) Biased COM and robot failure. 
        In (a), (b), (d), and (e), the dotted lines represent the reference trajectory, and the solid lines represent the actual positions during the three flights.
        (c) and (f) indicate the control input of the ASSC and the robust feedback controller (RFC) of n3.
        In (c) and (f), the thick lines represent the odd-numbered robots, and the thin lines represent the even-numbered robots.
        In (f), the failed robot ($i = 8$) does not apply thrust after the failure.
        }\label{fig:ex_result12}
        \vspace{-2.5mm}
\end{figure*}



\section{EXPERIMENT}
To verify our proposed method using actual robots, a prototype was manufactured for use in practical experiments.
The experiment was performed under two conditions.
First,
we verified aerial transportation using both the robust feedback controller and ASSC.
Next, aerial transportation without the robust feedback controller was verified.

\subsection{Prototype}
The prototype consisted of a rectangular payload and eight one-rotor robots, as shown in Fig. \ref{fig:proto}.
Each robot was controlled by a single flight controller (PIXHAWK \cite{c41}) mounted on the transported payload.
Upon implementation of the distributed control, the control input of each robot was calculated independently by the flight controller. 
The software for the method was coded using MATLAB Simulink and Stateflow, and implemented in the flight controller.
In addition, 
the position of the payload was obtained from the motion capture system, while the attitude was obtained via the inertial measurement unit (IMU) in the flight controller.
Further, the COM position and mass of the payload were changed employing the battery and an additional weight.
The size of the payload was $1.6 \times 0.2$ m, and the mass excluding the robot and battery was approximately $1.9$ kg. 
We used $2400$ KV brushless motors and 5.1 inch propellers for the robots.
The maximum thrust of one robot was $14.2$ N according to the motor manufacturer's specification value.


\subsection{Condition}
The controller was designed considering deviations in the COM along the $y$-axis and a robot failure, as shown in Fig. \ref{fig:ex_con1}.
Upon loading the prototype with an additional weight,  
the COM was biased by $0.25$ m in the $y$-direction, with the total mass of the prototype including the additional weight being $2.7$ kg.
In the first transportation experiments, the goal position was a platform that was $0.5$ m high ($Z_r = 0.5$) and $3$ m ahead ($X_r = 3$),
and the reference was considered as a set of waypoints to the goal.

Subsequently, in the failure experiment, one robot was stopped during transportation, as shown in Fig. \ref{fig:ex_con1}.
Finally, in the fully distributed controller without sharing the attachment positions of robots, 
the feedback control input $\bm{U_f}$, shown in Fig. \ref{fig:cont}, was set to zero.
In addition, the robots were only aware of the sign relationship between their inputs and the four outputs.
Here, the experiment was conducted while hovering and the COM was centered.
In all experiments, the measurement data were recorded in a $200$ ms cycle.

\subsection{Results}
Figure \ref{fig:ex_result12} shows the results of aerial transportation with a biased COM and robot failure.
The results confirmed that the proposed method enabled the positions and yaw angle control to reach the reference, 
even if the COM was biased or robot failure occurred.
In the robot failure experiment, 
although the altitude was observed to immediately drop following failure,
the prototype quickly returned to the reference position.
The robust feedback inputs of eight robots were approximately zero except for the unsteady state at the time of failure, 
and are as small as $2$ N or lesser even at the time of failure. 
In other words, most of the flight control is processed by ASSC. 
In addition, ASSC instantly compensated for the shortage of thrust in the event of a failure. 
However, certain offsets were observed in the $Y$-direction at the time of failure.
This offset occurred when the prototype entered a dead band of control owing to fluctuations caused by the failure. 

Figure \ref{fig:ex3} shows the results without the robust feedback input $\bm{U_f}$.
In this case, the prototype took off, hovered, and landed even with fully decentralized control, although the asymptotic stability was not proven.

\begin{figure}[t]
        \vspace{2mm}
        \begin{minipage}[b]{1\linewidth}
        \centering
        \includegraphics[keepaspectratio, scale=0.55]{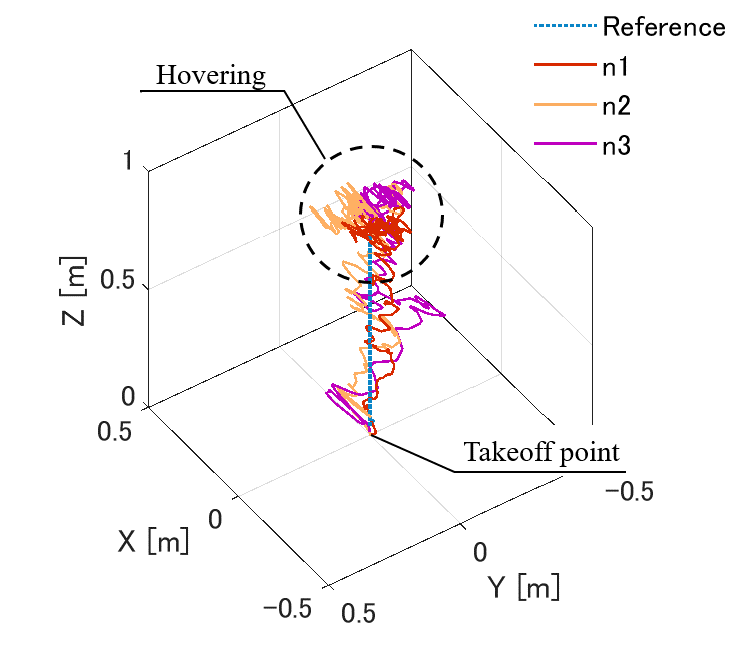}
        \subcaption{Trajectory from takeoff to hovering}\label{fig:ex3_3d}
        \end{minipage} \\
        \begin{minipage}[b]{1\linewidth}
        \vspace{3mm}
        \centering
        \includegraphics[keepaspectratio, scale=0.4]{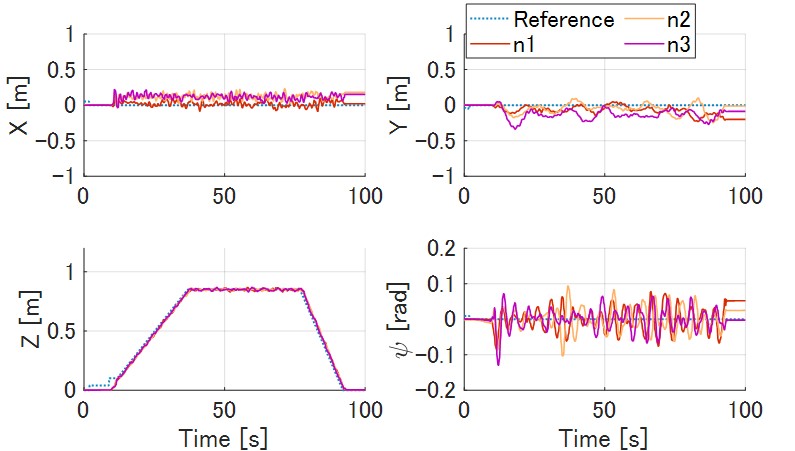}
        \subcaption{Time series of position and yaw angle}\label{fig:ex3_t}
        \end{minipage}
        \caption{Payload positions during three flight experiments without $\bm{U_f}$.
        The dotted lines represent the reference trajectory, and the solid lines represent the actual positions during the three flights.
        }\label{fig:ex3}
        \vspace{-4mm}
\end{figure}

\section{CONCLUSIONS}
In this study, we proposed a novel decentralized controller-based ASSC for unstable transportation systems. 
First, a feedback controller was introduced to transform the unstable system into an SPR system by sharing the attachment positions among the robots. 
This controller was robust against expected fluctuations (mass, COM, and failures). 
Next, the asymptotic stability of the ASSC considering multiple outputs to the references was proven.
Its effectiveness was confirmed through simulations using rectangular and L-shaped payloads. 
Furthermore, cooperative transportation was demonstrated using a prototype with a payload larger than the robots, even under robot failure. 
Finally, the proposed system was shown to be capable of carrying an unknown payload,
even if the attachment positions were not shared. 

Experiments 
confirmed that the system can be hovered only using the ASSC. 
Moreover, being a fully decentralized control system, 
the restrictions on the robot positions are eliminated. 
However, the asymptotic stability using only an ASSC for unstable systems has not yet been proven.
In the future, we aim to realize decentralized control that guarantees asymptotic stability with relaxed restrictions.
Moreover, in this study, distributed processing was performed using a software;
therefore, decentralization on hardware by manufacturing a system wherein each robot has a microcomputer is another area of future research.





\clearpage

\end{document}